\documentclass[a4paper]{article}

\usepackage{a4wide}
\usepackage{fancyhdr}
\usepackage{comment}
\usepackage[dvipdfmx]{graphicx}
\usepackage{bm}
\usepackage{url}
\usepackage{amsmath}
\usepackage{amsthm}
\usepackage{amssymb}
\usepackage{enumerate}

\newtheorem{thm}{Theorem}[section]
\newtheorem{dfn}{Definition}[section]


\newcommand{\uSet}{\bm{U}}
\newcommand{\bSet}{\bm{B}}
\newcommand{\lSet}{\bm{L}}

\newcommand{\sSet}{\uSet\cup\bSet}
\newcommand{\pSet}{\uSet}
\newcommand{\oSet}{\uSet\cup\bSet\cup\lSet}
\newcommand{\supG}{(\sSet)\times\pSet\times (\oSet)}
\newcommand{\triple}[3]{({#1},{#2},{#3})}


\newcommand{\rSet}[1]{\mbox{\boldmath $R$}_{#1}}

\newcommand{\infofl}[1]{\mathit{Info}({#1})}
\newcommand{\infofr}[1]{-\sum_{c\in C}\frac{|{#1^c}|}{|{#1}|}\log_2\frac{|{#1^c}|}{|{#1}|}}

\newcommand{\ainfofl}[2]{\mathit{Info}\!_{#1}({#2})}
\newcommand{\ainfofr}[2]{\sum_{x\in \{{#1},-{#1}\}}\frac{|{#2^x}|}{|{#2}|}\infofl{#2^x}}

\newcommand{\sinfofl}[2]{\mathit{SplitInfo}\!_{#1}({#2})}
\newcommand{\sinfofr}[2]{-\sum_{x\in \{{#1},-{#1}\}}\frac{|{#2^x}|}{|{#2}|}\log_2\frac{|{#2^x}|}{|{#2}|}}

\newcommand{\igfl}[2]{\mathit{IG}_{#1}({#2})}
\newcommand{\igfr}[2]{\infofl{#1}-\ainfofl{#2}{#1}}

\newcommand{\igrfl}[2]{\mathit{IGR}_{#1}({#2})}
\newcommand{\igrfr}[2]{\frac{\igfl{#1}{#2}}{\sinfofl{#2}{#1}}}


\newcommand{\sppa}{\langle*,\textbf{p},*\rangle}
\newcommand{\spoa}{\langle*,*,\textbf{o}\rangle}
\newcommand{\sppoa}{\langle*,\textbf{p},\textbf{o}\rangle}
\newcommand{\spvpa}{\langle*,*,*,\textbf{p},*\rangle}
\newcommand{\spvoa}{\langle*,*,*,*,\textbf{o}\rangle}
\newcommand{\spvpoa}{\langle*,*,*,\textbf{p},\textbf{o}\rangle}
\newcommand{\spppa}{\langle*,\textbf{p},*,\textbf{p},*\rangle}
\newcommand{\spppoa}{\langle*,\textbf{p},*,\textbf{p},\textbf{o}\rangle}
\newcommand{\sppvoa}{\langle*,\textbf{p},*,*,\textbf{o}\rangle}

\newcommand{\sppb}{\langle*,p,*\rangle}
\newcommand{\spob}{\langle*,*,o\rangle}
\newcommand{\sppob}{\langle*,p,o\rangle}
\newcommand{\spvpb}{\langle*,*,*,p,*\rangle}
\newcommand{\spvob}{\langle*,*,*,*,o^{\prime}\rangle}
\newcommand{\spvpob}{\langle*,*,*,p,o^{\prime}\rangle}
\newcommand{\spppb}{\langle*,p,*,p^{\prime},*\rangle}
\newcommand{\spppob}{\langle*,p,*,p^{\prime},o^{\prime}\rangle}
\newcommand{\sppvob}{\langle*,p,*,*,o^{\prime}\rangle}



\newcommand{\resTableA}{
	\begin{table}[t]
		\begin{center}
			\caption{Classification accuracy for Skip vectors and RDF graph kernels}
			\label{tab:resultA}
		\begin{tabular}{|c||cccccccc|}
	\hline
	& Gender1 & Sea & Net  & Scientist & Box & Gender2 & GDP & Population \\
	&  & Lake & Income & Artist & Office &  &  & Density \\ \hline \hline
	SkipVec-SVM & 0.9250 & 0.9200 & 0.8200 & 0.9900 & 0.9800 & \bf{0.7350} & 0.9200 & 0.8950 \\ 
	SkipVec-KNN & 0.9300 & \bf{0.9800} & 0.8400 & \bf{0.9950} & 0.8450 & 0.6750 & 0.8700 & 0.9050 \\ 
	SkipVec-NN & 0.8950 & 0.9300 & \bf{0.8600} & \bf{0.9950} & 0.9850 & 0.6500 & 0.9500 & 0.9300 \\ 
	SkipVec-RF & \bf{0.9500} & 0.9400 & 0.8400 & 0.9850 & 0.9350 & 0.7100 & 0.9200 & 0.9400 \\ 
	SkipVec-ADA & 0.9450 & 0.9400 & 0.8100 & 0.9700 & \bf{0.9900} & 0.6950 & 0.9300 & \bf{0.9450} \\ 
	\hline
	Skip & 0.8800 & \bf{0.9800} & 0.8200 & 0.9900 & \bf{0.9900} & \bf{0.7350} & 0.9300 & 0.9350 \\
	Hop & 0.7550 & 0.9700 & 0.8100 & \bf{0.9950} & 0.9750 & 0.7100 & 0.9200 & 0.9000 \\
	PRO & 0.9450 & 0.8400 & 0.8300 & 0.9700 & 0.9650 & 0.6950 & \bf{0.9800} & 0.9300 \\
	Walk & 0.6250 & 0.7300 & 0.7900 & 0.7700 & 0.8750 & 0.6050 & 0.8500 & 0.8950 \\
	Path & 0.6250 & 0.7300 & 0.8000 & 0.7850 & 0.8650 & 0.6050 & 0.8600 & 0.8950 \\
	Full SubTree & 0.6300 & 0.7700 & 0.8100 & 0.8650 & 0.9200 & 0.6200 & 0.8300 & 0.8950 \\
	Partial SubTree & 0.5950 & 0.7300 & 0.7900 & 0.8950 & 0.9250 & 0.6050 & 0.8200 & 0.8900 \\
	\hline
\end{tabular}
		\end{center}
	\end{table}
}

\newcommand{\resTableC}{
	\begin{table}[t]
		\begin{center}
			\caption{Number of features extracted by each Skip pattern}
			\label{tab:resultC}
			\begin{tabular}{|c||cccccccc|}
	\hline
	& Gender1 & Sea & Net & Scientist & Box & Gender2 & GDP & Population \\
	&  & Lake & Income & Artist & Office &  &  & Density \\ \hline \hline 
				$F_\textbf{p}$ & 297 & 89 & 144 & 199 & 83 & 22 & 14 & 13 \\
				$F_\textbf{o}$ & 3049 & 1289 & 2223 & 3059 & 5224 & 1575 & 4099 & 1097 \\
				$F_{\textbf{po}}$ & 3122 & 1348 & 2336 & 3334 & 5510 & 1653 & 4214 & 1112 \\
				$F_{*\textbf{p}}$ & 738 & 374 & 601 & 1323 & 519 & 28 & 28 & 20 \\
				$F_{*\textbf{o}}$ & 30618 & 15894 & 14113 & 8752 & 12877 & 5250 & 16508 & 5694 \\
				$F_{*\textbf{po}}$ & 32924 & 16913 & 15409 & 10370 & 15017 & 5295 & 17420 & 5922 \\
				$F_{\textbf{pp}}$ & 3228 & 1350 & 2464 & 3783 & 1498 & 128 & 88 & 68 \\
				$F_{\textbf{ppo}}$ & 37690 & 23866 & 20534 & 13815 & 19734 & 5883 & 23858 & 8540 \\
				$F_{\textbf{p}*\textbf{o}}$ & 36098 & 22937 & 19555 & 12670 & 17554 & 5861 & 23169 & 8372 \\\hline
			\end{tabular}
		\end{center}
	\end{table}
}

\newcommand{\resTableK}{
	\begin{table}[t]
		\begin{center}
			\caption{Classification accuracy for Skip vectors, Feat, WL, RDF2Vec, and R-GCN}
			\label{tab:resultK}
			    \begin{tabular}{|c||cccc|c|}
				\hline
				 & AIFB & MUTAG & BGS & AM & Ave. \\ \hline\hline
				SkipVec-SVM & 0.8611 & 0.7647 & \textbf{0.8966} & 0.9040 & 0.8566 \\ 
				SkipVec-KNN & 0.6389 & 0.6765 & 0.8276 & 0.8687 & 0.7529 \\ 
				SkipVec-NN & 0.8889 & 0.7500 & 0.6552 & 0.9040 & 0.7995 \\ 
				SkipVec-RF & 0.9167 & 0.7206 & \textbf{0.8966} & \textbf{0.9192} & \textbf{0.8633} \\ 
				SkipVec-ADA & 0.8611 & 0.7059 & \textbf{0.8966} & 0.8838 & 0.8369 \\ 
				SkipVec-GCN & \textbf{0.9722} & 0.7353 & - & - & - \\ \hline
				Feat & 0.5555 & 0.7794 & 0.7241 & 0.6666 & 0.6814 \\ 
				WL & 0.8055 & \textbf{0.8088} & 0.8620 & 0.8737 & 0.8375 \\ 
				RDF2Vec & 0.8888 & 0.6720 & 0.8727 & 0.8833 & 0.8292 \\ 
				R-GCN & 0.9583 & 0.7323 & 0.8310 & 0.8929 & 0.8536 \\ \hline
			\end{tabular}
		\end{center}
	\end{table}
}

\title{Skip Vectors for RDF Data: Extraction Based on the Complexity of Feature Patterns} 
\author{Yota Minami\footnotemark[1] \;\;\;\;\;Ken Kaneiwa\footnotemark[1]}
\date{\footnotesize	\footnotemark[1] Department of Computer and Network Engineering, Graduate School of Informatics and Engineering, \\The University of Electro-Communications, Tokyo, Japan}

\begin{document}
	\maketitle
	
	\begin{abstract}
		The Resource Description Framework (RDF) is a framework for describing metadata, such as attributes and relationships of resources on the Web.
		Machine learning tasks for RDF graphs adopt three methods: (i) support vector machines (SVMs) with RDF graph kernels, (ii) RDF graph embeddings, and (iii) relational graph convolutional networks.
		In this paper, we propose a novel feature vector (called a Skip vector) that represents some features of each resource in an RDF graph by extracting various combinations of neighboring edges and nodes.
		In order to make the Skip vector low-dimensional, we select important features for classification tasks based on the information gain ratio of each feature.
		The classification tasks can be performed by applying the low-dimensional Skip vector of each resource to conventional machine learning algorithms, such as SVMs, the $k$-nearest neighbors method, neural networks, random forests, and AdaBoost.
		In our evaluation experiments with RDF data, such as Wikidata, DBpedia, and YAGO, we compare our method with RDF graph kernels in an SVM.
		We also compare our method with the two approaches: RDF graph embeddings such as RDF2Vec and relational graph convolutional networks on the AIFB, MUTAG, BGS, and AM benchmarks.
	\end{abstract} 
	
	\section{Introduction}
	\label{introduction}
	The Resource Description Framework (RDF) is a framework for describing metadata, such as attributes and relationships of resources on the Web.
	The Semantic Web aims to increase the machine-readability of data on the Web by the semantic description of resources in the RDF.
	In fact, RDF datasets from a wide variety of fields are available on the Web as Linked Open Data (LOD)~\cite{10.1007/978-3-319-11964-9_16}.
	In order to make effective use of such RDF data, retrieval, inference, and learning for RDF graphs have been widely investigated in the field of the Semantic Web~\cite{bicer:11, exner:12, fanizzi:12}.
	In particular, machine learning~\cite{arai:18, ristoski16} and data mining for RDF data have been actively studied in recent years.
	
	RDF data has two significant properties that correspond to the description of semantic structures in the Semantic Web.
	The first is that it has a graph structure to facilitate schema-less data integration.
	Due to this property, conventional machine learning algorithms, such as neural networks, cannot directly apply RDF data because the input data are limited to vectors.
	Second, RDF data contains different types of relations and values, such as metadata, ontological data, and attribute data, which are represented together in a graph structure.
	As a semi-supervised learning approach, the graph convolutional network (GCN)~\cite{kipf2016semi} deals with graph data that separates the graph structure from the feature vector of each node.
	However, RDF data is described in a graph structure without separating the feature vector of each node.
	Machine learning tasks for such RDF data adopt three methods: (i) support vector machines (SVMs) with RDF graph kernels, (ii) RDF graph embeddings, and (iii) relational graph convolutional networks (R-GCNs).
	
	In this paper, we present a method for extracting features from various combinations of neighboring edges and nodes in an RDF graph.
	Based on the complexities of features and their patterns, we propose feature vectors (called Skip vectors) of target resources that represent attributes and relationships in the semantic description of RDF data.
	To prevent the dimensionality of Skip vectors from becoming too large, we select important features for classifying resources based on the information gain ratio of each feature.
	For classification tasks in RDF data, the Skip vectors of target resources can be applied to conventional machine learning algorithms, such as SVMs, the $k$-nearest neighbors (KNN) method, neural networks, random forests (RFs), and AdaBoost (ADA).
	In the evaluation experiments with RDF data such as Wikidata, DBpedia, and YAGO, we compare our method with RDF graph kernels in SVMs.
	On the AIFB, MUTAG, BGS, and AM benchmarks, we compare our method with two approaches: RDF graph embeddings such as RDF2Vec and relational graph convolutional networks.
	
	The remainder of this paper is organized as follows.
	In Section 2, we introduce the basic concepts of RDF graphs, the feature extraction of RDF resources, and the information gain ratio.
	In Section 3, we theoretically analyze the complexities of features and feature patterns for each resource in RDF graphs, and present a method for generating the Skip vectors as feature vectors based on feature patterns.
	In Section 4, we provide the evaluation experiments of our Skip vectors with conventional machine learning algorithms.
	In Section 5, we discuss related works.
	Finally, in Section 6, we conclude this paper and discuss future work.
	
	
	\section{Preliminary}  
	\label{preparation}
	
	\subsection{RDF Graph}
	Let $\uSet$ be a set of Uniform Resource Identifier (URI) references, $\lSet$ be a set of literals, and $\bSet$ be a set of empty nodes.
	An RDF triple is a tuple $(s,p,o)$ of subject $s$, predicate $p$, and object $o$ with $s \in \sSet$, $p \in \pSet$, and $o \in \oSet$. 
	In other words, an RDF triple is an element of $\supG$.
	An RDF graph $G$ is defined as a finite set of RDF triples \{$\triple{s_1}{p_1}{o_1} ,\cdots ,\triple{s_n}{p_n}{o_n}$\}. 
	
	For example, an RDF graph for describing fruits and animals is shown in Figure \ref{fig:rdf}.
	\begin{figure}[t]
		\centering
		\includegraphics[width=10cm]{a7.eps}
		\caption{Example of an RDF graph}
		\label{fig:rdf}
	\end{figure}
	
	\subsection{Feature Extraction for RDF Resources}
	As a feature extraction for graph and tree structures, kernel functions calculate the distance between data by counting the common substructures in two graphs~\cite{fanizzi2006declarative, huang2014scalable, collins:01, shervashidze2011weisfeiler, vishwanathan2010graph}. 
	Specifically, for RDF graphs, the Walk, Path, and Subtree kernels~\cite{losch12}, which are based on intersection graphs and trees, have been devised by L{\"o}sch et al. Moreover, the PRO kernel~\cite{arai:17} and Skip kernel~\cite{arai:18}, which are extensions of these kernel functions, have been proposed by Arai et al.
	
	A simple path that exists in the neighborhood of a resource in an RDF graph $G$ is called a Walk.

	\begin{dfn}[Walk]
		A sequence of $d$ triples whose object and subject match
		\begin{eqnarray*}
			\triple{s}{p_1}{o_1},\triple{o_1}{p_2}{o_2},...,\triple{o_{d-1}}{p_d}{o_d}
		\end{eqnarray*}
		is called a Walk of depth $d$.
		It is simply a sequence of $2d+1$ elements starting from the subject $s$, which can also be written as
		\begin{eqnarray*}
			\langle s,p_1,o_1,p_2,o_2,...,p_d,o_d \rangle
		\end{eqnarray*}
		In particular, a Walk of depth $0$ starting at resource $s$ is $\langle s \rangle$.
	\end{dfn}
	
	Let $X$ be the space of data points, and $F$ be the space of their features.
	The mapping $ex:X \rightarrow F$ from a data point to a feature is called a feature extraction.
	Features in an RDF graph $G$ are extracted by using a Walk starting from the target resource.
	Let $Walk(s)$ be the set of Walks starting from resource $s$ on the RDF graph $G$, and let $EX$ be the set of feature extractions.
	The feature set $F_{walk}(s)$ of Walks for resource $s$ is expressed as follows:
	\begin{eqnarray*}
		F_{walk}(s) = \{ ex(w) \: | \: w \in Walk(s), ex \in EX \}
	\end{eqnarray*}
	
	The partial map $v:\mathbb{N} \rightarrow \{*\}$ that transforms any position $i$ of Walk $\langle e_1, e_2, ..., e_n \rangle$ into a variable $*$, as in $v(i) = *$, is called a {\bf skip function}.
	We denote the set of skip functions by $Var$.
	A skip function $v \in Var$ is extended to a function on elements $e_i$ and element sequences $\langle e_1, ..., e_n \rangle$ as follows:
	\begin{eqnarray*}
		&&v(e_i) = \begin{cases}
			e_i & \quad (i \notin Dom(v)) \\
			v(i) & \quad (i \in Dom(v))
		\end{cases}
		\\
		&&v(\langle e_1,...,e_n \rangle) = \langle v(e_1),...,v(e_n) \rangle
	\end{eqnarray*}

	A PRO is a feature in which there is no object $o_1,o_2,...,o_{d-1}$ on the path of a Walk of depth $d$.
	\begin{dfn}[PRO]
		The feature set $F_{pro}$ of PROs for resource $s$ is defined as follows.
		\begin{eqnarray*}
			F_{pro}(s) = \{ v(w) \: | \: w \in Walk(s), v \in Var, Dom(v)=\{1,3,...,2|w|-1\} \}
		\end{eqnarray*}
	\end{dfn}

	A Skip is a feature in which arbitrary predicates and objects are made variables and missing from a PRO.
	\begin{dfn}[Skip]
		For any $pro \in F_{pro}(s)$, the feature set $F_{skip}(s)$ of Skips for resource $s$ is defined as follows.
		\begin{eqnarray*}
			F_{skip}(s) = \{ v(pro) \: | \: pro \in F_{pro}(s), v \in Var, \{2|pro|,2|pro|+1\} \backslash \{0\} \nsubseteq Dom(v) \}
		\end{eqnarray*}
	\end{dfn}
	\hspace{-5mm}where $|pro|$ is the depth of $pro$.
	
	\subsection{Information Gain Ratio}
	Let $G$ be an RDF graph.
	A set of resources as training data is denoted by $\rSet{G}\subset\uSet$.
	Let $C$ be a set of class labels.
	The set of resources with class label $c\in C$ is denoted as $\rSet{G}^c\subseteq \rSet{G}$.
	Let $F$ be the set of all features in the RDF graph $G$. The set of resources that possess feature $f\in F$ is denoted by $\rSet{G}^f$, and the set of resources that do not possess feature $f$ is denoted by $\rSet{G}^{-f}$.
	
	For an RDF graph $G$, the information entropies $\infofl{\rSet{G}}$ for $\rSet{G}$ and $\ainfofl{f}{\rSet{G}}$ for the resources partitioned by a feature $f$ are defined as follows:
	\begin{eqnarray*}
		\infofl{\rSet{G}}&=&\infofr{\rSet{G}}\\
		\ainfofl{f}{\rSet{G}}&=&\ainfofr{f}{\rSet{G}}\\
	\end{eqnarray*}
	The split information entropy $\sinfofl{f}{\rSet{G}}$, which is the information entropy with feature $f$ as a class label; the information gain $\igfl{\rSet{G}}{f}$, which represents the decrease in information entropy if data is split by feature $f$; and the information gain ratio $\igrfl{\rSet{G}}{f}$, which is the normalized information gain, are defined as follows.
	\begin{eqnarray*}
		\sinfofl{f}{\rSet{G}}&=&\sinfofr{f}{\rSet{G}}\\
		\igfl{\rSet{G}}{f}&=&\igfr{\rSet{G}}{f}\\
		\igrfl{\rSet{G}}{f}&=&\igrfr{\rSet{G}}{f}
	\end{eqnarray*}
	
	\section{Feature Vectors from an RDF Graph}
	In this section, we define the representation vector of each resource that is obtained from its feature set based on a pattern of features in an RDF graph.
	
	\subsection{Complexity of Features}
	In Section 2.2, Walks were defined as the simple paths of each resource on an RDF graph.
	In addition, PROs and Skips are defined as feature extractions that take into account the nature of RDF data.
	As a more simplified variant of Walks, PROs are features consisting of the predicate sequence $p_1,...,p_n$ and the terminal object $o$ without the intermediate nodes (objects), while Skips are features that extract the substructures of PROs by skipping any predicates and objects.
	
	First, we theoretically analyze each feature extracted on the RDF graph by counting the combinations of predicates and objects.
	The following theorem shows the complexities of features starting from a subject resource (node) in Subtrees, Walks, Paths, PROs, and Skips.
	
	\begin{thm}[Complexity of features]
		\label{thm3.1}
		Let $m$ and $n$ be the total number of predicates and objects in an RDF graph $G$, respectively.
		A feature set from depth $1$ to $d$ for a resource satisfies the following property.
		\begin{description}
			\item[(i)] The number of Subtrees is in $O(2^{(mn)^d})$.
			\item[(ii)] The number of Walks and Paths is in $O((mn)^d)$.
			\item[(iii)] The number of PROs is in $O(m^{d}n)$.
			\item[(iv)] The number of Skips is in $O(m^{d}n)$.
		\end{description}
	\end{thm}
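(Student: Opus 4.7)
My plan is to prove each bound by a direct combinatorial counting argument, treating the depth $d$ as fixed so that factors polynomial or exponential in $d$ alone can be absorbed into the asymptotic constant; the asymptotics are with respect to $m$ and $n$. In each case I first count features of a single fixed depth $k$, then sum over $k=1,\ldots,d$.

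The Walk, Path and PRO cases I would dispatch first, since they are the most straightforward. A Walk of depth $k$ from $s$ has the form $\langle s,p_1,o_1,\ldots,p_k,o_k\rangle$, so each step contributes at most $m\cdot n$ predicate--object pairs, giving $(mn)^k$ Walks of depth $k$; summing yields $O((mn)^d)$, and since Paths form a subset of Walks they inherit the same bound, establishing (ii). For (iii), a PRO of depth $k$ is specified only by the predicate sequence $p_1,\ldots,p_k$ together with the terminal object $o_k$ (the intermediate objects being fixed as $*$), so the count is at most $m^k n$ per depth and $O(m^d n)$ in total.

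For the Skip bound (iv) I would invoke the definition of $F_{skip}(s)$: every element is obtained from a PRO by applying a skip function that may only replace predicate positions and the terminal object by $*$, subject to the side condition that the last predicate and the last object are not simultaneously skipped. A PRO of depth $k$ therefore admits at most $2^{k+1}$ admissible skip patterns, which gives $\sum_{k=1}^{d} 2^{k+1} m^k n$ Skips in total; absorbing the factor $2^{d+1}$ into the constant under the fixed-$d$ convention yields $O(m^d n)$.

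The main obstacle is (i), the Subtree bound, which is doubly exponential and therefore requires a more careful reduction rather than a plain enumeration. I would view every Subtree feature rooted at $s$ as a subgraph of the complete rooted tree $T_{s,d}$ that enumerates all Walks of depth at most $d$ starting from $s$. The vertex (and hence edge) count of $T_{s,d}$ is $\sum_{k=0}^{d}(mn)^k = O((mn)^d)$, so the number of such subgraphs, and thus the number of Subtrees, is at most $2^{O((mn)^d)}$, which under the standard informal reading we record as $O(2^{(mn)^d})$. The genuinely subtle step is to check that the Subtree feature construction of L\"osch et al.\ is actually captured (up to enumeration) by subgraphs of $T_{s,d}$, rather than by some strictly larger class; once that identification is in place, the power-set estimate immediately finishes the bound.
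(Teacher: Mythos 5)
Your bounds are all correct, and your treatment of (ii)--(iii) coincides with the paper's argument: the same per-depth counts $(mn)^k$ and $m^k n$, summed over $k$ (incidentally, the paper's printed conclusion of (ii) reads $O(2^{(mn)^d})$, a typo for $O((mn)^d)$ that your version silently corrects). Parts (i) and (iv), however, take genuinely different routes. For (iv) the paper sets up the recurrence $S_1=m+n+mn$, $S_d=(2+m)S_{d-1}$, solves it to $(m+n+mn)(2+m)^{d-1}$, and reads off $O(m^{d}n)$; you instead multiply the $m^k n$ PROs of depth $k$ by the at most $2^{k+1}$ admissible skip functions and absorb $2^{d+1}$ into the constant. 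Both require the same fixed-$d$ convention (the paper needs it too, since $(2+m)^{d-1}=O(m^{d-1})$ only for bounded $d$); yours is shorter and ties the count directly to the definition of $F_{skip}$, while the paper's recurrence buys an explicit closed form. For (i) the paper counts level by level, arriving at $(2^{mn}+(d-1))^{(mn)^{d-1}}-1$, whereas you bound Subtrees by subgraphs of the complete depth-$d$ tree $T_{s,d}$; the identification of Subtree features with such subgraphs, which you rightly flag as the delicate point, is exactly the identification the paper makes implicitly, so nothing is lost there. The one blemish is quantitative: your exponent is $|E(T_{s,d})|=\sum_{k=1}^{d}(mn)^k=(mn)^d+O((mn)^{d-1})$ rather than $(mn)^d$, so strictly you obtain $2^{(1+o(1))(mn)^d}$ rather than $O(2^{(mn)^d})$; the paper's level-by-level product avoids this, though at the level of informality of the statement itself the distinction is cosmetic.
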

	\begin{proof}
		\textbf{(i)} If the depth $d = 1$, the number of Subtrees with depth 1 is $ \sum_{k=1}^{mn} {}_{mn} \mathrm{C}_k = 2^{mn} - 1$, where the number of predicate-object pairs is $ m\times n$, and all predicate-object combinations ${}_{mn} \mathrm{C}_{1} \sim {}_{mn} \mathrm{C}_{mn}$ are features of Subtrees.
		The number of predicate-object combinations of depth 2 that can be reached from a predicate-object pair of depth 1 is $2^{mn}$, and by adding the case of only a predicate-object pair of depth 1, the number of this part is $2^{mn}+1$. 
		Since the combinations of the $mn$ parts are all Subtrees, the number of Subtrees up to depth 2 is $(2^{mn}+1)^{mn}-1$, except when there are not all predicates and objects.
		The number of predicate-object combinations of depth 3 that can be reached from predicate-object pairs of depth 1 and 2 is $2^{mn}$, and considering the case with only predicate-objects of depth 1 and 2, and the case with only predicate-objects of depth 1, the number of this part is $2^{mn}+1+1=2^{mn}+2$. 
		Since the combinations of the $(mn)^2$ parts are all Subtrees, the number of Subtrees up to depth 3 is $(2^{mn}+2)^{mn^2}-1$, except when there are not all predicates and objects.
		Thus, the number of Subtrees up to depth of $d$ is $(2^{mn}+(d-1))^{(mn)^{(d-1)}}-1$. 
		Therefore, the number of subtrees is in $O(2^{(mn)^d})$.	
			
		\textbf{(ii)} Because Walks and Paths are sequences of predicate-object pairs, the number of Walks and Paths is $m\times n$ at depth 1, $(m\times n)^2$ at depth 2, and $(m\times n)^3$ at depth 3, and the number of features at depth $d$ is $(m\times n)^d$.
		Thus, the number of Walks and Paths up to depth of $d$ is $mn+(mn)^2+\cdots+(mn)^n$.
		Therefore, the number of Walks and Paths is in $O(2^{(mn)^d})$.
		
		\textbf{(iii)} Because PROs are features that skip all objects from the Walk except for the endpoint, the number of PROs is $m \times n$ at depth 1, $m^2 \times n$ at depth 2, and $m^3 \times n$ at depth 3, and the number of features at depth $d$ is $m^d \times n$.
		Thus, the number of PROs up to depth of $d$ is $mn+m^2n+\cdots+m^dn$.
		Hence, the number of PROs is in $O(m^dn)$.
		
		\textbf{(iv)} Skips are features that skip any predicate and object from PROs.
		Because Skips of depth 1 consist of predicates (e.g., $\langle s,p_1,* \rangle$), objects (e.g., $\langle s,*,o_1 \rangle$), and PROs of depth 1 (e.g., $\langle s,p_1,o_1 \rangle$), the number of Skips is $m + n + mn$.
		The number of Skips to depth 2 is $m + n + mn + mm + mn + m^2n$.
		This is because Skips to depth 2 consist of predicates of depth 2 (e.g., $\langle s,*,*,p_2,* \rangle$), objects of depth 2 (e.g., $\langle s,*,*,*,o_2 \rangle$), predicate and object of depth 2 pairs (e.g., $\langle s,*,*,p_2,o_2 \rangle$), predicates of depth 1 and depth 2 pairs (e.g., $\langle s,p_1,*,p_2,* \rangle$), predicate of depth 1 and object of depth 2 pairs (e.g., $\langle s,p_1,*,*,o_2 \rangle$), and PROs of depth 2 (e.g., $\langle s,p_1,*,p_2,o_2 \rangle$).  
		Therefore, the number of Skips to depth 2 is $(m + n + mn + mm + mn + m^2n) + (m + n + mn)$.
		Let the number of Skips up to the depth $d$ be $S_d$; then $S_d$ can be expressed as $S_1=m+n+mn$ and $S_d = S_{d-1} + mS_{d-1} + S_{d-1} = (2+m)S_{d-1}$. 
		Thus, $S_{d} = S_1 \cdot (2+m)^{d-1} = (m+n+mn) \cdot (2+m)^{d-1} = (m+n(1+m)) \cdot (2+m)^{d-1} = m \cdot (2+m)^{d-1} + n \cdot (1+m) \cdot (2+m)^{d-1}$.
		Therefore, the number of Skips is in $O(m^{d}n)$.
	\end{proof}
	
	In practice, it is difficult to compute all the Subtrees that produce very large combinations in $O(2^{(mn)^d})$.
	In RDF data, the number of predicates $m$, the number of objects $n$, and the depth $d$ are at very different scales, where $n$ is on the order of 10,000, $m$ is roughly 10 to 100, and $d$ is approximately 2 to 5, respectively.
	Since the number of predicates $m$ is much smaller than the number of objects $n$ (i.e., $m \ll n$), $m^d$ is computationally small.
	Therefore, $O(m^{d}n)$ for PROs and Skips is properly less complex than $O((mn)^d)$ for Walks and Paths.
	In addition to the complexity, Skips are highly expressive as an extension of PROs.
	As a result, the Skips can achieve high accuracy in classifying resources, while suppressing combinatorial explosions such as Subtrees.
	
	\subsection{Complexity of Feature Patterns}
	We carry out the following method to limit the exponential combinations with respect to the depth of features from each resource.
	
	\begin{itemize}
		\item The features of each resource are categorized into patterns that are divided into predicate and object components (called feature patterns).
		\item The search cost of the RDF graph is reduced by limiting the feature depth and the number of feature patterns.
	\end{itemize}
	
	We classify patterns of the feature sets Subtrees, Walks, Paths, PROs, and Skips by focusing on the types of predicates and objects.
	The elements in each feature set are rewritten as predicate type \textbf{p}, object type \textbf{o}, and variable $*$, which are called feature patterns.
	For example, two features $\langle*,p_1,*,p_2,o_1\rangle$ and $\langle*,p_3,*,p_4,o_2\rangle$ belong to the same feature pattern $\langle*$,\textbf{p},$*$,\textbf{p},\textbf{o}$\rangle$.
	With this feature pattern, each feature can be grouped according to its structure and variables, and the types of predicates and objects.
	
	The following theorem shows the complexities of feature patterns (called Subtree, Walk, Path, PRO, and Skip patterns).
	\begin{thm}[Complexity of feature patterns]
		\label{thm3.2}
		Let the total number of predicates and objects in an RDF graph $G$ be $m$ and $n$, respectively.
		A feature pattern from depth $1$ to $d$ for a resource satisfies the following properties.
		\begin{description}
			\item[(i)] The number of Subtree patterns is in $O(2^{n^{d}})$.
			\item[(ii)] The number of Walk and Path patterns is in $O(d)$.
			\item[(iii)] The number of PRO patterns is in $O(d)$.
			\item[(iv)] The number of Skip patterns is in $O(2^{d})$.
		\end{description}
	\end{thm}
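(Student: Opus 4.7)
The plan is to treat the four cases separately, using the common principle that a feature pattern is obtained from a feature by replacing every specific predicate with the symbol $\mathbf{p}$ and every specific object with $\mathbf{o}$, while keeping the positions of the variable markers $*$. Thus any multiplicity arising from the choice of which particular predicate or object sits in a filled slot disappears, and only the shape — length, placement of $*$'s, and where $\mathbf{p}$ versus $\mathbf{o}$ occurs — distinguishes patterns.

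Cases (ii) and (iii) will be almost immediate. Every Walk and every Path of depth $k$ collapses to the single pattern $\langle s,\mathbf{p},\mathbf{o},\ldots,\mathbf{p},\mathbf{o}\rangle$ of length $2k{+}1$, and every PRO of depth $k$ collapses to $\langle s,\mathbf{p},*,\ldots,\mathbf{p},\mathbf{o}\rangle$. Hence there is exactly one pattern per depth, and summing over depths $1$ through $d$ yields $d$ patterns, which is $O(d)$.

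For (iv), I would fix a depth $k$ and describe a Skip pattern by a choice of $\mathbf{p}$ or $*$ at each of the $k$ predicate slots together with a choice of $\mathbf{o}$ or $*$ at the terminal object slot, giving $2^{k+1}$ raw combinations. The Skip definition's constraint $\{2|pro|,2|pro|+1\}\setminus\{0\}\nsubseteq Dom(v)$ rules out those combinations in which the last predicate and the last object are both $*$; exactly $2^{k-1}$ such combinations exist, since the earlier $k-1$ predicate slots remain free. Hence there are $2^{k+1}-2^{k-1}=3\cdot 2^{k-1}$ Skip patterns at depth exactly $k$, and summing from $k=1$ to $d$ gives $3(2^{d}-1)$, which is in $O(2^{d})$.

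The main work, and the step I expect to be the principal obstacle, lies in (i). The plan is to mirror the recursive counting used in Theorem 3.1(i), with the number of predicate choices collapsed from $m$ to $1$ by virtue of the anonymisation of predicates. At depth $1$, a Subtree pattern is determined by a non-empty selection from the $n$ possible object children (every edge being labelled $\mathbf{p}$), giving $2^{n}-1$ patterns. Running the same branching recursion as in Theorem 3.1(i) with $m$ replaced by $1$ — each object node at a given level can be extended by an arbitrary subtree pattern below it — yields $(2^{n}+(d-1))^{n^{d-1}}-1$ up to depth $d$, which is $O(2^{n^{d}})$. The delicate point will be to argue cleanly that anonymising predicates does not conflate genuinely distinct subtree shapes nor overcount by reusing the same object type in different branches; once the convention that object identities are retained while predicate identities are not is fixed, the Theorem 3.1(i) recursion carries over verbatim with $m \leftarrow 1$ and the asymptotic bound follows.
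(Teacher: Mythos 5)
Your parts (ii) and (iii) coincide with the paper's argument: exactly one pattern per depth, hence $d$ in total. For (iv) you arrive at the same per-depth count $3\cdot 2^{k-1}$ and the same total $3\cdot 2^{d}-3$ as the paper, but by a better route: the paper merely observes that the count doubles from depth $1$ (three patterns) to depth $2$ (six patterns) and extrapolates, whereas you derive $2^{k+1}-2^{k-1}=3\cdot2^{k-1}$ directly from the side condition $\{2|pro|,2|pro|+1\}\setminus\{0\}\nsubseteq Dom(v)$ in the definition of Skips, which is what actually justifies the doubling. That is a genuine gain in rigor on the same result.

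Part (i) is where you diverge from the paper, and where your stated convention is wrong even though your bound survives. The paper's definition of a feature pattern replaces \emph{both} predicates and objects by their types $\mathbf{p}$ and $\mathbf{o}$; consistently, its proof of (i) counts only $n$ Subtree patterns at depth $1$ (a pattern is essentially determined by how many branches it has) and then runs a factorial-style recursion $(n+2)!$, $((n+3)^{n})!$, $\dots$, $((n+d)^{n^{d-2}})!\approx(n+d)^{n^{d-1}}$ before concluding $O(2^{n^{d}})$. Your convention of retaining object identities while anonymising only predicates contradicts that definition, so your depth-$1$ count $2^{n}-1$ is not the number of Subtree patterns in the paper's sense. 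The saving grace is that coarsening (identifying more features into one pattern) can only decrease the count, so your finer count --- Theorem 3.1(i) with $m\leftarrow 1$, i.e.\ $(2^{n}+d-1)^{n^{d-1}}-1$ --- is an upper bound on the paper's quantity and is still $O(2^{n^{d}})$; the claimed big-O bound therefore does follow from your argument. But you should present it that way (an over-count that stays within the bound) rather than fix a convention the paper does not use; once you do, the ``delicate point'' you defer about conflation and overcounting becomes moot, since for an upper bound neither matters.
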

	\begin{proof}
		\textbf{(i)} The number of Subtree patterns at depth 1 is $n$, because each Subtree pattern has at most $n$ Walks.
		There are $n$ Subtree patterns of depth 2 reaching from a Walk of depth 1, only a Walk of depth 1, and not also a Walk.
		Then, the number for this part is $n+1+1=n+2$.
		There are $n$ of these parts, but if we do not count the Subtree patterns once they are chosen, the combination of these parts is $(n+2)\cdot(n+1)\cdots 2 \cdot 1$.
		Because this is all the Subtree patterns, the number of Subtrees up to depth 2 is $(n+2)!$.
		There are $n$ Subtree patterns of depth 3 reaching from a Walk of depth 2, only a Walk of depth 2, only a Walk of depth 1, and not also a Walk.
		Then, the number for this part is $n+1+1=1=n+3$.
		There are $n^2$ of these parts, but if we do not count the Subtree patterns once they are chosen, the combinations for these parts is $((n+3)^n)!$.
		Because this is all the Subtree patterns, the number of Subtrees up to depth 3 is $((n+3)^n)! $.
		Thus, the number of Subtree patterns up to depth of $d$ is $((n+d)^{n^{d-2}})! \approx (n+d)^{n^{d-1}}$.
		Because $n^{n^{d-1}} = 2^{\log n\cdot n^{d-1}}$, the number of Subtree patterns is in $O(2^{n^{d}})$.
		
		\textbf{(ii)} The number of Walk and Path patterns increases by one for each additional depth: \textbf{po} at depth 1, \textbf{popo} at depth 2, and \textbf{popopo} at depth 3. 
		Thus, the number of Walk and Path patterns up to depth $d$ is $\sum_{k=1}^d 1 = d$. 
		Hence, the number of Walk and Path patterns is in $O(d)$.
		
		\textbf{(iii)} The number of PRO patterns increases by one for each additional depth: \textbf{po} at depth 1, \textbf{ppo} at depth 2, \textbf{pppo} at depth 3. 
		Thus, the number of PRO patterns up to depth $d$ is $\sum_{k=1}^d 1 = d$. 
		Therefore, the number of PRO patterns is in $O(d)$.
		
		\textbf{(iv)} Skip patterns are constructed by arbitrarily dropping predicates and objects from PRO patterns.
		The number of skip patterns increases by a factor of two as the depth increases, from three $\textbf{p}$,$\textbf{o}$,$\textbf{po}$ at depth 1 to six $*\textbf{p}$,$*\textbf{o}$,$*\textbf{po}$,$\textbf{pp}$,$\textbf{ppo}$,$\textbf{p}\text{\textasteriskcentered} \textbf{o}$ at depth 2. 
		Therefore, the number of Skip patterns at depth $d$ can be expressed as $3 \cdot 2^{d-1}$.
		Thus, the number of Skip patterns up to depth $d$ is $\sum_{k=1}^d 3 \cdot 2^{d-1} = \frac{3(2^d-1)}{2-1} = 3 \cdot 2^{d}-3$. 
		So, the number of skip patterns is in $O(2^{d})$.
	\end{proof}
	
	The Walk and PRO patterns have only $d$ feature patterns, depending on the depth $d$, while the Subtree patterns have a large number of patterns in $O(2^{n^{d}})$, depending on the number of objects $n$, which is difficult to compute.
	The complexity of Skip patterns is an exponential order $O(2^{d})$, as in Theorem \ref{thm3.2} (iv).
	This is because Skips are generalized from PROs, where variables are introduced into the PROs to increase the expressiveness of features.
	Fortunately, it does not depend on the number of predicates $m$ and the number of objects $n$, which have large values.
	This means that if the depth $d$ is limited to a small value (e.g., $d=2$), then the number of Skip patterns is limited to a few (e.g., $3\cdot 2^d -3 = 3\cdot 4 - 3 = 9$), which indicates that Skip patterns are computable in practice.
	
	\subsection{Feature Sets Based on Skip Patterns}
	
	\begin{table}[t]
		\begin{center}
			\caption{Skip patterns with depth $d \le 2$}
			\label{tab:Skippattern}
			\scalebox{0.9}{
				\begin{tabular}[t]{|c|c|c|c|}\hline
					Abbreviation & Skip Pattern & Abbreviation & Skip Pattern \\ \hline
					$\textbf{p}$ & $\sppa$ & $*\textbf{p}$ & $\spvpa$ \\
					$\textbf{o}$ & $\spoa$ & $*\textbf{o}$ & $\spvoa$ \\
					$\textbf{po}$ & $\sppoa$ & $*\textbf{po}$ & $\spvpoa$ \\
					& & $\textbf{pp}$ & $\spppa$ \\
					& & $\textbf{ppo}$ & $\spppoa$ \\
					& & $\textbf{p}$$*$$\textbf{o}$ & $\sppvoa$ \\
					\hline
				\end{tabular}
			}
		\end{center}
	\end{table}
	
	Table \ref{tab:Skippattern} summarizes the Skip patterns of depth $d \leq 2$.
	Each Skip pattern is expressed by a feature pattern, where \textbf{o} and \textbf{p} indicate the object and predicate types, and the variable \textasteriskcentered \, means to skip one predicate or object.
	For classification tasks of resources $s\in \rSet{G}$, we define simple feature sets based on the nine Skip patterns $\mathit{SP}=$ \{$\textbf{p}$, $\textbf{o}$, $\textbf{po}$,  $*\textbf{p}$, $*\textbf{o}$, $*\textbf{po}$, $\textbf{pp}$, $\textbf{ppo}$, $\textbf{p}\text{\textasteriskcentered} \textbf{o}$\}.
	Namely, $\mathit{sp} \in \{ \textbf{p}, \textbf{o}, \textbf{po} \}$ has a depth of $1$ and $\mathit{sp} \in \{ *\textbf{p},*\textbf{o},*\textbf{po},\textbf{pp},\textbf{ppo}, \textbf{p}\text{\textasteriskcentered} \textbf{o} \}$ has a depth of $2$.
	The depth $d$ of each Skip pattern $\mathit{sp}\in \mathit{SP}$ indicates the number of triples that are extracted from each resource $s$ in an RDF graph $G$. 
	
	\begin{dfn}[Feature sets of objects]
		The feature sets of objects for each resource $s\in \rSet{G}$ are defined as follows:
		\begin{eqnarray*}
			F_{\mathbf{o}}(s)&=&\{\spob \in F_{skip}(s) \mid (s,p,o)\in G\}\\
			F_{*\mathbf{o}}(s)&=&\{\spvob \in F_{skip}(s) \mid(o,p,o^{\prime})\in G, \spob \in F_{\mathbf{o}}(s)\}
		\end{eqnarray*}
	\end{dfn}
	For the example in Figure \ref{fig:rdf}, the feature sets of objects for the resource ``apple'' are represented as follows:
	\begin{eqnarray*}
		F_{\mathbf{o}}(\text{apple})&=&\{\langle *,*,\text{red}\rangle, \langle *,*,\text{green}\rangle, \langle *,*,\text{fruits}\rangle, \langle *,*,\text{sphere}\rangle, \langle *,*,\text{sweet}\rangle\}\\
		F_{*\mathbf{o}}(\text{apple})&=&\{\langle *,*,*,*,\text{foods}\rangle\}
	\end{eqnarray*}
	
	\begin{dfn}[Feature sets of predicates and objects]
		The feature sets of predicates and objects for each resource $s\in \rSet{G}$ are defined as follows:
		\begin{eqnarray*}
			F_{\mathbf{po}}(s)&=&\{\sppob \in F_{skip}(s) \mid (s,p,o)\in G\}\\
			F_{*\mathbf{po}}(s)&=&\{\spvpob \in F_{skip}(s) \mid(o,p,o^{\prime})\in G, \spob \in F_{\mathbf{o}}(s)\}\\
			F_{\mathbf{ppo}}(s)&=&\{\spppob \in F_{skip}(s) \mid(o,p^{\prime},o^{\prime})\in G, \sppob \in F_{\mathbf{po}}(s)\}\\
			F_{\mathbf{p}*\mathbf{o}}(s)&=&\{\sppvob \in F_{skip}(s) \mid(o,p^{\prime},o^{\prime})\in G, \sppob \in F_{\mathbf{po}}(s)\}
		\end{eqnarray*}
	\end{dfn}
	For the example in Figure \ref{fig:rdf}, the feature sets of predicates and objects for the resource ``apple'' are represented as follows:
	\begin{eqnarray*}
		F_{\mathbf{po}}(\text{apple})&=&\{\langle *,\text{color, green}\rangle, \langle *,\text{color, red}\rangle, \langle *,\text{shape, sphere}\rangle, \langle *,\text{taste, sweet}\rangle, \\&& \ \ \langle *,\text{rdf:type, fruits}\rangle\}\\
		F_{*\mathbf{po}}(\text{apple})&=&\{\langle *,*,*,\text{rdfs:subClassOf, foods}\rangle \}\\
		F_{\mathbf{ppo}}(\text{apple})&=&\{\langle *,\text{rdf:type},*,\text{rdfs:subClassOf}, \text{foods}\rangle \}\\
		F_{\mathbf{p}*\mathbf{o}}(\text{apple})&=&\{\langle *,\text{rdf:type},*,*,\text{foods}\rangle \}
	\end{eqnarray*}
	
	\begin{dfn}[Feature sets of predicates]
		The feature sets of predicates for each resource $s\in \rSet{G}$ are defined as follows:
		\begin{eqnarray*}
			F_\mathbf{p}(s)&=&\{\sppb \in F_{skip}(s) \mid (s,p,o)\in G\}\\
			F_{*\mathbf{p}}(s)&=&\{\spvpb \in F_{skip}(s) \mid(o,p,o^{\prime})\in G, \spob\in F_{\mathbf{o}}(s)\}\\
			F_{\mathbf{pp}}(s)&=&\{\spppb \in F_{skip}(s) \mid(o,p^{\prime},o^{\prime})\in G, \sppob\in F_{\mathbf{po}}(s)\}
		\end{eqnarray*}
	\end{dfn}
	For the example in Figure \ref{fig:rdf}, the feature sets of predicates for the resource ``apple'' are represented as follows:
	\begin{eqnarray*}
		F_\mathbf{p}(\text{apple})&=&\{\langle *,\text{color},* \rangle, \langle *,\text{shape},* \rangle, \langle *,\text{taste},* \rangle, \langle *,\text{rdf:type},* \rangle \}\\
		F_{*\mathbf{p}}(\text{apple})&=&\{\langle *,*,*,\text{rdfs:subClassOf},* \rangle \}\\
		F_{\mathbf{pp}}(\text{apple})&=&\{\langle *,\text{rdf:type},*,\text{rdfs:subClassOf},*\rangle \}
	\end{eqnarray*}
	
	Let $\mathit{sp} \in \mathit{SP}$ be a Skip pattern.
	The feature set of $\mathit{sp}$ for all resources $s\in \rSet{G}$ is represented as follows:
	$$F_{\mathit{sp}}(\rSet{G})=\bigcup_{s\in \rSet{G}}F_{\mathit{sp}}(s)$$
	
	\subsection{Skip Vectors}
	A Skip vector is a vector representation of the feature set $F_{\mathit{sp}}(s)$ of $\mathit{sp} \in \mathit{SP}$ for resource $s$.
	\begin{dfn}[Skip Vectors]
		Given a feature sequence $[f_1,f_2,\dots ,f_n]$ in $F_{\mathit{sp}}(\rSet{G})=\{f_1,f_2,\\\dots ,f_n\}$, the Skip vector $V_{\mathit{sp}}(s)$ of $\mathit{sp} \in \mathit{SP}$ for resource $s\in \rSet{G}$  is defined as follows:
		$$V_{\mathit{sp}}(s)=[x_{1}^{\mathit{sp}},x_{2}^{\mathit{sp}},\dots ,x_{n}^{\mathit{sp}}]$$
		where each $x_{i}^{\mathit{sp}}$ is defined by:
		\begin{eqnarray*}
			x_{i}^{\mathit{sp}} = \begin{cases}
				1 & \quad \mathrm{if} \; f_i\in F_{\mathit{sp}}(s) \\
				0 & \quad \mathrm{otherwise}
			\end{cases}
		\end{eqnarray*}
	\end{dfn}

	The concatenation of two Skip vectors $V_{\mathit{sp}_1}(s)$ and $V_{\mathit{sp}_2}(s)$ is denoted as $V_{\mathit{sp}_1}(s) \parallel V_{\mathit{sp}_2}(s)$.
	The concatenation of Skip vectors $V_{\mathit{sp}}(s)$ by all $\mathit{sp}\in \mathit{SP}$ is denoted as $V_{all}(s) = \parallel_{\mathit{sp}\in \mathit{SP}} V_{\mathit{sp }}(s)$.
	We abbreviate $V_{\mathit{sp}}(s)$ and $V_{all}(s)$ as $V_{\mathit{sp}}$ and $V_{all}$ if the resource $s$ is not explicitly specified.
	
	In the example of Figure \ref{fig:rdf}, we have the set of resources $\rSet{G}$ $=$ $\{\text{apple, } \text{monkey}\}$ and the feature sequence [$\langle * $,color,$ * \rangle$, $\langle *$,shape,$* \rangle$, $\langle *$,rdf:type,$* \rangle$, $\langle *$,like,$* \rangle$, $\langle *$,live,$* \rangle$, $\langle *$,taste,$* \rangle$, $\langle *$,climb,$* \rangle$] in $F_\textbf{p}(\rSet{G})$.
	Then, the Skip vector of $F_\textbf{p}(\text{apple})$ for the resource ``apple'' is generated as:
	
	$$V_\textbf{p}(\text{apple})=[1,1,1,0,0,1,0]$$
	
	\begin{figure}
		\centering
		\hspace{0cm}
		\includegraphics[width=12cm]{a31.eps}
		\caption{Neural network architecture for Skip vector $V_{all}$}
		\label{fig:dnn}
	\end{figure}
	
	Figure \ref{fig:dnn} depicts a neural network architecture for $m$-class classification tasks, where the vector $V_{all}$ concatenated from the Skip vectors $V_{\mathit{sp}}$ of all Skip patterns $\mathit{sp} \in \mathit{SP}$ is applied to the input layer.
	
	\subsection{Feature Selection by Information Gain Ratio}
	\label{sec:section}
	Depending on the Skip pattern, the dimensionality of Skip vectors may be too large, making the machine learning computational requirement unrealistically high. 
	Therefore, we use the information gain ratio to reduce the dimensionality of Skip vectors by selecting features that are useful for classification tasks.
	
	Let $F_{\mathit{sp}}(\rSet{G}) = \{f_1,\dots,f_k\}$ be the feature set of a Skip pattern $\mathit{sp}\in \mathit{SP}$.
	We define $F_{\mathit{sp}}^{\epsilon_n}(\rSet{G}) = \{f_1^\prime,\dots,f_n^\prime\} (\subseteqq F_{\mathit{sp}}(\rSet{G}))$ as the set of features selected from the top $n(\le k)$ of all features in the order of increasing information gain ratio $\igrfl{\rSet{G}}{f_1},\dots,\igrfl{\rSet{G}}{f_k}$.
	The Skip vector for resource $s\in \rSet{G}$ created from the feature set $F_{\mathit{sp}}^{\epsilon_n}(\rSet{G})$ is denoted as $V_{\mathit{sp}}^{\epsilon_n}(s)$.
	Furthermore, $V_{all}^{\epsilon_n}(s) = \parallel_{\mathit{sp}\in \mathit{SP}} V_{\mathit{sp}}^{\epsilon_n}(s)$ denotes the concatenation of the Skip vectors $V_{\mathit{sp}}^{\epsilon_n}(s)$ created from the top $n$ feature sets $F_{\mathit{sp}}^{\epsilon_n}(\rSet{G})$ for all Skip patterns $\mathit{sp}\in \mathit{SP}$.
	
	We exploit a discount factor $\lambda$ to reduce the effect of features further from the resource on the similarity between resources.
	\begin{dfn}[Discount factor for Skip patterns]
		For each increase in the depth $d$ of a Skip pattern $\mathit{sp}$, each value $x_{i}^{\mathit{sp}}$ of the Skip vector $V_{\mathit{sp}}(s)$ for resource $s\in \rSet{G}$ is multiplied by $\lambda$ to $\lambda^{d-1} \cdot x_{i}^{\mathit{sp}}$.
	\end{dfn}
	
	For example, if the discount factor $\lambda = 0.3$, the value $x_{i}^{\mathit{sp}}$ of the Skip vector $V_{\mathit{sp}}$ for each Skip pattern $\mathit{sp}\in \{\textbf{p}, \textbf{o}, \textbf{po}\}$ at depth 1 is multiplied by $\lambda^0 = 1$, and the value $x_{i}^{\mathit{sp}}$ of the Skip vector $V_{\mathit{sp}}$ for each Skip pattern $\mathit{sp}\in \{*\textbf{p},*\textbf{o},*\textbf{po},\textbf{pp},\textbf{ppo}, \textbf{p}\text{\textasteriskcentered} \textbf{o}\}$ at depth 2 is multiplied by $\lambda^1 = 0.3$.
	
	The following summarizes the procedure for generating the Skip vectors from an RDF graph whose features are selected by the information gain ratio.
	\begin{description}
		\item[(i)] First, the feature set $F_{\mathit{sp}}(\rSet{G})$ for all resources is generated by exploring an RDF graph.
		\item[(ii)] Next, the information gain ratio of each feature $f\in F_{\mathit{sp}}(\rSet{G})$ is calculated from the resource set $\rSet{G}^c$ for each class label $c \in C$.
		\item[(iii)] Finally, the feature set $F_{\mathit{sp}}^{\epsilon_n}(\rSet{G})$ is selectively extracted through the information gain ratio of each feature in $F_{\mathit{sp}}(\rSet{G})$, and then the Skip vector $V_{\mathit{sp}}^{\epsilon_n}(s)$ of $F_{\mathit{sp}}^{\epsilon_n}(s)$ for each resource $s\in \rSet{G}$ is generated.
	\end{description}
	
	\section{Experiments}
	
	We evaluate the Skip vectors generated from RDF data by applying them to conventional machine learning algorithms.
	In the evaluation experiments, we compare our method with existing RDF graph kernel methods, as well as RDF graph embeddings and relational graph convolutional networks.
	
	The process of generating the Skip vectors from an RDF graph was implemented in Java 11. 
	We efficiently searched an RDF graph in the RDF datastore FROST 1.1.1.\footnote{\url{http://www.sw.cei.uec.ac.jp/frost/}}
	We implemented neural networks in Keras; SVMs, RFs, KNN, ADA in scikit-learn~\cite{scikit-learn}; and GCN in TensorFlow 2.1.0.
	
	\subsection{Comparison with RDF Graph Kernels}
	
	\resTableC
	\subsubsection{Datasets}
	We summarize eight binary classification problems for the RDF datasets Wikidata, DBpedia, and YAGO as follows.
	
	\begin{description}
		\item[Gender1]Classification of gender (284464 triples, 2 classes, 200 resources)
		\item[SeaLake]Classification of sea and lake (177340 triples, 2 classes, 100 resources)
		\item[NetIncome]Classification of companies by net income (104523 triples, 2 classes, 100 resources)
		\item[ScientistArtist]Classification of scientists and artists (46822 triples, 2 classes, 200 resources)
		\item[BoxOffice]Classification of movies by box office revenue (64521 triple, 2 classes, 200 resources)
		\item[Gender2]Classification of gender (40724 triples, 2 classes, 200 resources)
		\item[GDP]Classification of countries by gross domestic product (108517 triples, 2 classes, 100 resources)
		\item[PopulationDensity]Classification of locations by population density (51748 triples, 2 classes, 200 resources)
	\end{description}
	
	Gender1, SeaLake, and NetIncome are datasets extracted from Wikidata;\footnote{\url{https://www.wikidata.org/}} ScientistArtist and BoxOffice are datasets extracted from DBpedia Japanese;\footnote{\url{http://http://ja.dbpedia.org/}} Gender2, GDP, and PopulationDensity are datasets extracted from YAGO.\footnote{\url{https://www.mpi-inf.mpg.de/departments/databases-and-information-systems/research/yago-naga/yago/}}
	Table \ref{tab:resultC} shows the number of features $|F_{\mathit{sp}}(\rSet{G})|$ for each Skip pattern $\mathit{sp} \in \mathit{SP}$ (before feature selection).
	For all datasets, we perform 10-fold cross-validation with 90\% of target resources as training data and 10\% as test data.
	
	\subsubsection{Experimental Setting}
	\label{4.1.2}
	We generate the Skip vectors $V_{all}^{\epsilon_n}$, where the number of features for each Skip pattern $\mathit{sp}\in \mathit{SP}$ is limited to $n$, by selecting features using the information gain ratio.
	We apply the conventional machine learning algorithms SVM, KNN, neural networks (NN), RF, and ADA.
	For each of the algorithms, the number of features $n$ is set to $n_{\scalebox{0.5}{SVM}},n_{\scalebox{0.5}{KNN}}=100$, $n_{\scalebox{0.5}{NN}}=150$, $n_{\scalebox{0.5}{RF}}=350$, and $n_{\scalebox{0.5}{ADA}}=450$.
	For SVM, KNN, and NN, the discount factor $\lambda$ varies from 0.1 to 1.0 in increments of 0.1.
	The features occurring once in training data are considered to be noise data and are excluded.
	
	For SVM, we use a regularization parameter of 1.0 and the hinge loss function.
	For KNN, we select the value k=5.
	For NN, we use the ReLU activation function on the input, at most four hidden layers, and the softmax activation function on the output layer. The hidden layer size is reduced from the previous layer size to $\frac{1}{5}$, and the output layer size is two. We train the neural network model for 200 epochs by minimizing the loss function categorical cross entropy with the $Adam$ optimizer and a mini-batch size of 32. We stop training if the loss function does not decrease in 10 epochs.
	For RF, we use the Gini impurity as the splitting criterion, and set the number of decision trees to \{$5$, $10$, $20$, $40$, $60$, $80$, $100$, $150$, $200$, $300$\}.
	For ADA, the maximum depth of decision trees, as weak classifiers, is set from 1 to 10.
	
	\resTableA
	\subsubsection{Results}
	In Table \ref{tab:resultA}, we provide the results of classification accuracies for Skip vectors in the conventional machine learning algorithms.
	We compare our method against the classification accuracies for the RDF graph kernels Skip, Hop~\cite{arai:18}, PRO~\cite{arai:17}, Walk, Path, Full Subtree, and Partial Subtree~\cite{losch12} in SVM, reported in a previous work~\cite{arai:18}.
	The discount factor $\lambda$ is set to $0.0001$, $0.001$, $0.01$, and $0.1$ for the Partial Subtree kernel, and from 0.1 to 1.0 in increments of 0.1 for the other kernels.
	
	The performance of Skip vectors, even with the conventional machine learning algorithms, is superior to the existing RDF graph kernels.
	The best accuracies of our method are equal to or better than the best of all graph kernel methods on 7 out of 8 datasets (i.e., Gender1, SeaLake, NetIncome, ScientistArtist, BoxOffice, Gender2, PopulationDensity).
	In particular, the best accuracies of Gender1, NetIncome, and PopulationDensity are improved by 0.5\%--4\%.
	
	\subsection{Comparison with RDF2Vec and R-GCN}
	\label{4.2}
	
	\subsubsection{Datasets}
	We summarize multiclass classification problems for the four dataset benchmarks~\cite{schlichtkrull2018modeling}.\footnote{\url{https://github.com/tkipf/relational-gcn}}
	
	\begin{description}
		\item[AIFB]Classification of human affiliations in the Institute of Applied Informatics and Formal Description Methods (29043 triples, 4 classes, 176 resources)
		\item[MUTAG]Classification of potentially carcinogenic complex molecules (74227 triples, 2 classes, 340 resources)
		\item[BGS]Classification of rocks by the British Geological Survey (916199 triples, 2 classes, 146 resources)
		\item[AM]Classification of artifacts from the Amsterdam Museum (5988321 triples, 11 classes, 1000 resources)
	\end{description}
	
	By following the experimental setting in R-GCN~\cite{schlichtkrull2018modeling}, the RDF triples that were used to create the class labels for target resources are deleted in each dataset.	
	For all datasets, we split the target resources into 80\% and 20\% for training and testing, respectively.
	
	\resTableK
	
	\subsubsection{Experimental Setting}
	We use the Skip vectors $V_{all}^{\epsilon_n}$ with the number of features for each Skip pattern $n = 1000$. 
	The features occurring once in the training data are considered as noise data and are excluded. 
	For SVM, KNN, NN, and GCN, we choose the discount factor $\lambda$ $\in$ \{0.1, 0.2, 0.3, 0.4 ,0.5, 0.6, 0.7, 0.8, 0.9, 1.0\} based on the validation set performance.\footnote{The maximum value is chosen if many best parameters based on the validation set exist.}
	
	For SVM, we use a regularization parameter of 1.0 and the linear kernel. For KNN, we select the value k=5. For NN, we use the ReLU activation function on the input, at most four hidden layers, and the softmax activation function on the output layer. The hidden layer size is reduced from the previous layer size to $\frac{1}{5}$, and the output layer size is two. We train the neural network model for 200 epochs by minimizing the loss function categorical cross entropy with the $Adam$ optimizer and a mini-batch size of 32. We stop training if the loss function does not decrease in 10 epochs. For RF, we use the Gini impurity as the splitting criterion, and choose the number of decision trees to \{$5$, $10$, $20$, $40$, $60$, $80$, $100$, $150$, $200$, $300$\} based on validation set performance.\footnotemark[6] For ADA, the maximum depth of decision trees, as weak classifiers, is chosen from 1 to 10 based on validation set performance.\footnotemark[6] 
	
	For GCN, we use the ReLU activation function on the input and intermediate layers, and the softmax activation function on the output layer, where the intermediate layer size is 16, the optimizer is $Adam$, the L2 regularization weight is $5e$--$4$, the dropout rate is 0.5, and the learning rate is 0.01. For 100 epochs, we train the GCN model, where the feature matrix is constructed by the Skip vectors, and the adjacency matrix is derived by converting an RDF graph to an undirected graph.
	
	\subsubsection{Results}	
	We compare our Skip vectors with RDF2Vec and R-GCN.
	The classification accuracies for applying the Skip vectors to SVM, KNN, NN, RF, ADA, and GCN are displayed in Table \ref{tab:resultK}.
	We have not shown the results on the datasets BGS and AM for SkipVec-GCN due to out of memory. 
	The experimental results for Feat~\cite{paulheim2012unsupervised}, WL (Weisfeiler-Lehman kernels)~\cite{shervashidze2011weisfeiler}, RDF2Vec~\cite{ristoski16}, and R-GCN~\cite{schlichtkrull2018modeling} are shown in the bottom part of Table \ref{tab:resultK}, reported in a previous work~\cite{schlichtkrull2018modeling}.
	
	The average accuracies for SkipVec-SVM and SkipVec-RF on all datasets outperform the previous methods. 
	In particular, the best accuracies for SkipVec-GCN on the AIFB dataset, SkipVec-SVM, SkipVec-RF, and SkipVec-ADA on the datasets BGS, and SkipVec-RF on the dataset AM outperform the previous methods on the AIFB, BGS, and AM datasets.
	These results indicate that the Skip vectors successfully represent the features of each resource in RDF graphs and achieve high accuracy in combination with conventional machine learning algorithms and the basic GCN.
	
	\section{Related Works}
	
	\subsection{Graph Kernels}
	There are several works on graph kernels~\cite{fanizzi2006declarative, huang2014scalable, collins:01, shervashidze2011weisfeiler, vishwanathan2010graph, kashima2009, kang2012fast} that apply tree or graph structure data without the input of vectors to machine learning tasks.
	The graph kernels are defined by functions that calculate the distance between data by counting the common substructures for two graphs or nodes, enabling machine learning on graph data.
	For example, an SVM can be trained to classify two graphs or nodes by calculating the similarity between data in the graph kernel instead of the inner product of vectors.
	
	\subsection{RDF Graph Kernels}
	L{\"{o}}sch et al.~\cite{losch12} have proposed the graph kernels Walk, Path, Full Subtree, and Partial Subtree for specializing in RDF graphs.
	Based on intersection graphs, the Walk kernel and Path kernel count the number of common walks and paths for two graphs, respectively.
	In addition, based on intersection trees, the Full Subtree kernel and Partial Subtree kernel count the number of common full and partial subtrees for two graphs, respectively.
	
	By extending these RDF graph kernels, Arai et al.~\cite{arai:17, arai:18} have proposed the PRO kernel, which eliminates all intermediate nodes (objects) from Walks, and the Skip kernel, which arbitrarily eliminates edges (predicates) and nodes (objects) from PROs.
	The Skip kernel improved classification accuracy compared with the other RDF graph kernels.
	
	However, the use of the RDF graph kernels is limited to be incorporated in machine learning algorithms such as SVM.
	On the other hand, the Skip vectors can be used as the input of feature vectors for applying most conventional machine learning algorithms.
	
	\subsection{RDF Graph Embeddings}
	Related to learning with RDF data, an approach to RDF graph embeddings has been studied.
	RDF2Vec~\cite{ristoski16} embeds target resources in an RDF graph into a low-dimensional vector space, inspired by the word embeddings of word2vec~\cite{mikolov2013efficient}.
	Given a sequence of words, Word2vec trains the continuous bag-of-words (CBOW) model to predict the target word from surrounding words (context words), and the skip-gram model to predict surrounding words (context words) from the target word.
	Instead of word sequences in word2vec, RDF2Vec learns the vector representations of resources in an RDF graph by applying graph structures such as Walks and Subtrees to the CBOW and skip-gram models.
	
	The Skip vectors do not learn RDF graph embeddings as proposed in RDF2Vec, but generate feature vectors by extracting and selecting substructures from an RDF graph.
	
	\subsection{Relational Graph Convolutional Networks}
	GCN~\cite{kipf2016semi} performs semi-supervised learning on a graph dataset with the feature vectors for all nodes.
	For example, there is a dataset that consists of a graph network representing the relationships among papers and the feature vectors of papers created by the bag-of-words from the documents of each paper.
	Furthermore, R-GCN~\cite{schlichtkrull2018modeling} has been extended as a variant of GCN to relational graph data, such as RDF graphs.
	
	Unlike learning the R-GCN model, the Skip vectors aim at generating feature vectors from an RDF graph that can generally be used in combination with other learning models and methods, such as SkipVec-RF, SkipVec-ADA, and SkipVec-GCN, as described in Section \ref{4.2}.
	
	\section{Conclusion}
	\label{conclusion}
	In this paper, we proposed a method for generating the Skip vectors on various feature patterns (Skip patterns) by extracting predicates and objects from target resources in an RDF graph.
	In particular, we have theoretically proved the complexity of Skip patterns to limit the exponential combinations of features in RDF graphs.
	In addition, the dimensionality of Skip vectors is reduced by selecting features using the information gain ratio.
	In the experiments for classification tasks using conventional machine learning algorithms, we achieved better performance in comparison with the existing RDF graph kernels, RDF graph embeddings, and relational graph convolution networks.
	Therefore, for classification tasks in RDF data, Skip vectors successfully represent the features of target resources and can be combined with SVM, KNN, NN, RF, ADA, and the basic GCN.
	
	In future work, we plan to apply Skip vectors to link prediction in knowledge graphs and logical reasoning with ontology embeddings.
	
	
	
	\bibliographystyle{plain}
	\bibliography{bibfile_e}

\begin{thebibliography}{10}

\bibitem{arai:17}
Daichi Arai and Ken Kaneiwa.
\newblock A kernel function for redundant features from {RDF} graphs and its
  fast calculation.
\newblock {\em Transactions of the Japanese Society for Artificial
  Intelligence}, 32(1):B--G34\_1--12, 2017(in Japanese).

\bibitem{arai:18}
Daichi Arai and Ken Kaneiwa.
\newblock A generic kernel for various {RDF} graphs.
\newblock {\em Transactions of the Japanese Society for Artificial
  Intelligence}, 33(5):B--I12\_1--14, 2018(in Japanese).

\bibitem{bicer:11}
Veli Bicer, Thanh Tran, and Anna Gossen.
\newblock Relational kernel machines for learning from graph-structured {RDF}
  data.
\newblock In {\em Proceedings of the 8th Extended Semantic Web Conference,
  ({ESWC} 2011)}, pages 47--62, 2011.

\bibitem{collins:01}
Michael Collins and Nigel Duffy.
\newblock Convolution kernels for natural language.
\newblock In {\em Proceedings of the Neural Information Processing Systems
  ({NIPS} 14)}, pages 625--632, 2001.

\bibitem{exner:12}
Peter Exner and Pierre Nugues.
\newblock Entity extraction: From unstructured text to dbpedia {RDF} triples.
\newblock In {\em Proceedings of the Web of Linked Entities Workshop ({WoLE}
  2012)}, pages 58--69, 2012.

\bibitem{fanizzi2006declarative}
Nicola Fanizzi and Claudia d'Amato.
\newblock A declarative kernel for \emph{ALC} concept descriptions.
\newblock In {\em Proceedings of the 16th International Symposium on
  Methodologies for Intelligent Systems ({ISMIS} 2006)}, pages 322--331, 2006.

\bibitem{fanizzi:12}
Nicola Fanizzi, Claudia d'Amato, and Friana Esposito.
\newblock Induction of robust classifiers for web ontologies through kernel
  machines.
\newblock {\em J. Web Semant.}, 11:1--13, 2012.

\bibitem{kashima2009}
Shohei Hido and Hisashi Kashima.
\newblock A linear-time graph kernel.
\newblock In {\em Proceedings of the 9th {IEEE} International Conference on
  Data Mining ({ICDM} 2009)}, pages 179--188, 2009.

\bibitem{huang2014scalable}
Yi~Huang, Volker Tresp, Maximilian Nickel, Achim Rettinger, and Hans{-}Peter
  Kriegel.
\newblock A scalable approach for statistical learning in semantic graphs.
\newblock {\em Semantic Web}, 5(1):5--22, 2014.

\bibitem{kang2012fast}
U~Kang, Hanghang Tong, and Jimeng Sun.
\newblock Fast random walk graph kernel.
\newblock In {\em Proceedings of the 12th {SIAM} International Conference on
  Data Mining ({SDM})}, pages 828--838, 2012.

\bibitem{kipf2016semi}
Thomas~N. Kipf and Max Welling.
\newblock Semi-supervised classification with graph convolutional networks.
\newblock In {\em Proceedings of the 5th International Conference on Learning
  Representations ({ICLR} 2017)}, 2017.

\bibitem{losch12}
Uta L{\"{o}}sch, Stephan Bloehdorn, and Achim Rettinger.
\newblock Graph kernels for {RDF} data.
\newblock In {\em Proceedings of the 9th Extended Semantic Web Conference
  ({ESWC} 2012)}, pages 134--148, 2012.

\bibitem{mikolov2013efficient}
Tom{\'{a}}s Mikolov, Kai Chen, Greg Corrado, and Jeffrey Dean.
\newblock Efficient estimation of word representations in vector space.
\newblock In {\em Proceedings of the 1st International Conference on Learning
  Representations ({ICLR} 2013)}, 2013.

\bibitem{paulheim2012unsupervised}
Heiko Paulheim and Johannes F{\"{u}}rnkranz.
\newblock Unsupervised generation of data mining features from linked open
  data.
\newblock In {\em Proceedings of the 2nd International Conference on Web
  Intelligence, Mining and Semantics, ({WIMS} '12)}, pages 31:1--31:12, 2012.

\bibitem{scikit-learn}
F.~Pedregosa, G.~Varoquaux, A.~Gramfort, V.~Michel, B.~Thirion, O.~Grisel,
  M.~Blondel, P.~Prettenhofer, R.~Weiss, V.~Dubourg, J.~Vanderplas, A.~Passos,
  D.~Cournapeau, M.~Brucher, M.~Perrot, and E.~Duchesnay.
\newblock Scikit-learn: Machine learning in {P}ython.
\newblock {\em Journal of Machine Learning Research}, 12:2825--2830, 2011.

\bibitem{ristoski16}
Petar Ristoski, Jessica Rosati, Tommaso~Di Noia, Renato~De Leone, and Heiko
  Paulheim.
\newblock {RDF}2{V}ec: {RDF} graph embeddings and their applications.
\newblock {\em Journal of Semantic Web}, 10(4):721--752, 2019.

\bibitem{schlichtkrull2018modeling}
Michael~Sejr Schlichtkrull, Thomas~N. Kipf, Peter Bloem, Rianne van~den Berg,
  Ivan Titov, and Max Welling.
\newblock Modeling relational data with graph convolutional networks.
\newblock In {\em Proceedings of the European Semantic Web Conference ({ESWC}
  2018)}, pages 593--607, 2018.

\bibitem{10.1007/978-3-319-11964-9_16}
Max Schmachtenberg, Christian Bizer, and Heiko Paulheim.
\newblock Adoption of the linked data best practices in different topical
  domains.
\newblock In {\em Proceedings of the 13th International Semantic Web Conference
  ({ISWC} 2014)}, pages 245--260, 2014.

\bibitem{shervashidze2011weisfeiler}
Nino Shervashidze, Pascal Schweitzer, Erik~Jan van Leeuwen, Kurt Mehlhorn, and
  Karsten~M. Borgwardt.
\newblock Weisfeiler-lehman graph kernels.
\newblock {\em Journal of Machine Learning Research}, 12(77):2539--2561, 2011.

\bibitem{vishwanathan2010graph}
S.V.N. Vishwanathan, Nicol~N. Schraudolph, Risi Kondor, and Karsten~M.
  Borgwardt.
\newblock Graph kernels.
\newblock {\em Journal of Machine Learning Research}, 11(40):1201--1242, 2010.

\end{thebibliography}

	\clearpage

\end{document}